\def\BibTeX{{\rm B\kern-.05em{\sc i\kern-.025em b}\kern-.08em
    T\kern-.1667em\lower.7ex\hbox{E}\kern-.125emX}}
\newtheorem{theorem}{Theorem}
\newtheorem{remark}{Remark}
\DeclareMathOperator*{\argmax}{arg\:max}
\newcommand{\prox}{{\rm{prox}}}
\newcommand{\pushright}[1]{\ifmeasuring@#1\else\omit\hfill$\displaystyle#1$\fi\ignorespaces}
\newcommand{\dotminus}{\mathbin{\text{\@dotminus}}}
\newcommand{\@dotminus}{%
  \ooalign{\hidewidth\raise1ex\hbox{.}\hidewidth\cr$\m@th-$\cr}%
}
\title{\LARGE\textbf{Learning Concave Bid Shading Strategies in Online Auctions via Measure-valued Proximal Optimization
}}
\author{Iman Nodozi, Djordje Gligorijevic, and Abhishek Halder
\thanks{Iman Nodozi is with onsemi, CA, USA, {\tt\footnotesize iman.nodozi@onsemi.com}.}
\thanks{Djordje Gligorijevic is with Meta Platforms, Inc., CA, USA {\tt\footnotesize{gligorijevic@meta.com}}.}
\thanks{Abhishek Halder is with the Department of Aerospace Engineering, Iowa State University, Ames, IA, USA, {\tt\footnotesize{ahalder@iastate.edu}}.}
}
\begin{document}
\bstctlcite{IEEE_b:BSTcontrol}
\maketitle
\thispagestyle{empty}
\pagestyle{empty}

\begin{abstract}
    This work proposes a bid-shading strategy for first-price auctions as a measure-valued optimization problem. We consider a standard parametric form for bid shading and formulate the problem as convex optimization over the joint distribution of shading parameters. After each auction, the shading parameter distribution is adapted via a regularized Wasserstein-proximal update with a data-driven energy functional. This energy functional is conditional on the context, i.e., on publisher/user attributes such as domain, ad slot type, device, or location. The proposed algorithm encourages the bid distribution to place more weight on values with higher expected surplus, i.e., where the win probability and the value gap are both large. We show that the resulting measure-valued convex optimization problem admits a closed form solution. A numerical example illustrates the proposed method.
\end{abstract}

\section{Introduction}

We consider an advertising (ad) campaign managed by a demand-side platform, where multiple advertisers participate in real-time bidding for ad impressions. Let $\Omega$ denote the set of all impression opportunities $i$ made available for sale across one or more real-time marketplaces. Each impression $i \in \Omega$ is sold through a sealed auction mechanism, where bidders submit their bids independently, without knowledge of the competitors' bids.

There are two main types of cost models in online advertising: \emph{first-price} and \emph{second-price} auctions \cite[Ch. 2]{krishna2009auction}, \cite{edelman2007internet}. In a second-price auction, the highest bidder wins but pays the second-highest bid. In a first-price auction, the winner pays their own bid.  We focus on the first-price cost model. 

For each impression $i \in \Omega$, let $b_i$ denote the corresponding \emph{bid price}. The highest competing bid, denoted $B_i^*$, is unknown and modeled as a nonnegative random variable. The uppercase $B_i^*$ emphasizes its stochastic nature, while the lowercase $b_i^*$ refers to either a realized or expected value. The impression is awarded if $b_i \geq b_i^*$, in which case the incurred cost equals $b_i$. The cumulative distribution function (CDF) of $B_i^*$ is $F_{B_i^*}(b) := \mathbb{P}(B_i^* \leq b)$, assumed to be continuous with support on $\mathbb{R}_{\geq 0}$.



The \emph{intrinsic value} $v_i \in \mathbb{R}_{\geq 0}$ for the awarded impression is not known \emph{a priori} and must be estimated for each impression. Value estimation is based on campaign goals, such as the likelihood of user engagement (performance) or exposure (branding)\cite{wang2017deep,pan2018field,yoshikawa2018nonparametric}. Branding campaigns aim to increase brand awareness by reaching new users and building long-term recognition. In contrast, performance-driven campaigns focus on immediate outcomes such as maximizing clicks or purchases. In our setup, we assume that an \emph{estimated value} $\hat{v}_i$ is given, in order to focus on the learning of bid distributions and the optimization of bidding strategies under the first-price auction setting.

\subsubsection*{Related works} 
In a first-price auction, since the winner pays their own bid, \emph{bid shading strategies} \cite{sluis2019everything,gligorijevic2020bid,kasberger2024robust} are needed to avoid overpaying. Several algorithms have been proposed \cite{borgs2007dynamics,jank2011automated,karlsson2021adaptive} that rely on perturbation, forecasting, or pointwise maximization over explicitly estimated values or win probabilities. These approaches are often implemented within predefined traffic segments, which simplifies deployment but requires manual segment design and prevents sharing information across segments. For a recent survey, see \cite{ou2023survey}. Ref. \cite{kasberger2024robust} proposed a distributionally robust bid shading policy that maximizes surplus against worst-case value and competitor distributions chosen from Kullback-Leibler ambiguity set. 

In contrast, this work treats the learning task as a convex optimization problem over the space of joint probability measures/distributions on the shading parameters. Our method computes this parameter distribution via an entropy-regularized Wasserstein-proximal update, which simultaneously enforces distributional smoothness and incorporates observed auction feedback through a data-driven energy-like functional that we propose.

\subsubsection*{Contributions}
\begin{itemize}
    \item We reformulate bid shading as a convex optimization problem over a distribution of parameters, rather than estimating point values within pre-defined segments.  
    
    \item We solve this optimization problem over bid shading parameter distribution  using an entropy-regularized Wasserstein-proximal update. We derive a closed form solution for this update, making the approach practical for online use.  
    
    \item The update mechanism naturally shifts probability mass toward parameter values with higher expected surplus, where both win probability and value gap are large.  
\end{itemize}

\subsubsection*{Organization} In Sec. \ref{sec:formulation}, we explain the underlying models and ideas leading to the problem formulation. In Sec. \ref{sec:ProxUpdate}, we detail the proposed measure-valued proximal update, its closed form solution (Theorem \ref{Thm:WassProxOfLinear}), and the resulting Algorithm \ref{algoirithm1}. Sec. \ref{sec:numerical} provides illustrative numerical results for the proposed method. Sec. \ref{sec:conclusions} concludes the work.


\section{Problem Formulation}\label{sec:formulation}
We consider the problem of maximizing the expected total advertising value produced over the campaign, subject to a finite advertising budget $\xi$ that constrains the expected cumulative cost of winning impressions.

The bid prices $b_i \in \mathbb{R}$ serve as decision variables that control how resources are allocated for participating in the auctions, as per the following constrained optimization problem:
\begin{equation}
   \begin{array}{cl}
\max\limits _{\left\{b_i \in \mathbb{R} \mid \forall i \in \Omega\right\}} & \mathbb{E}[V] \\
\text {subject to:} & \mathbb{E}[C] \leq \xi,
\end{array} 
\end{equation}
where $\mathbb{E}[V]$ is the expected total value of awarded impressions, and $\mathbb{E}[C]$ is the expected cumulative cost. These quantities are defined as:

\begin{equation}
    V = \sum_{i \in \Omega} v_i \, \mathbb{I}_{\left\{b_i \geq B_i^*\right\}}, \quad 
C = \sum_{i \in \Omega} b_i \, \mathbb{I}_{\left\{b_i \geq B_i^*\right\}},
\label{defVandC}
\end{equation}
where the indicator function $\mathbb{I}_{\left\{b_i \geq B_i^*\right\}}=1$ if the impression $i$ wins (i.e., $b_i \geq B_i^*$), and $=0$ otherwise. 


Following the formulation in \cite{karlsson2021adaptive}, we consider a large-scale online bidding problem in first-price auctions, where the number of impression opportunities $|\Omega|$ is on the order of millions or billions, and the distribution of the highest competing bid $B_i^*$ is unknown. To address this, the bidding process is decomposed into three components: impression valuation, campaign-level budget control, and bid shading. 

\begin{figure*}[t]
    \centering
    \includegraphics[width=.8\textwidth]{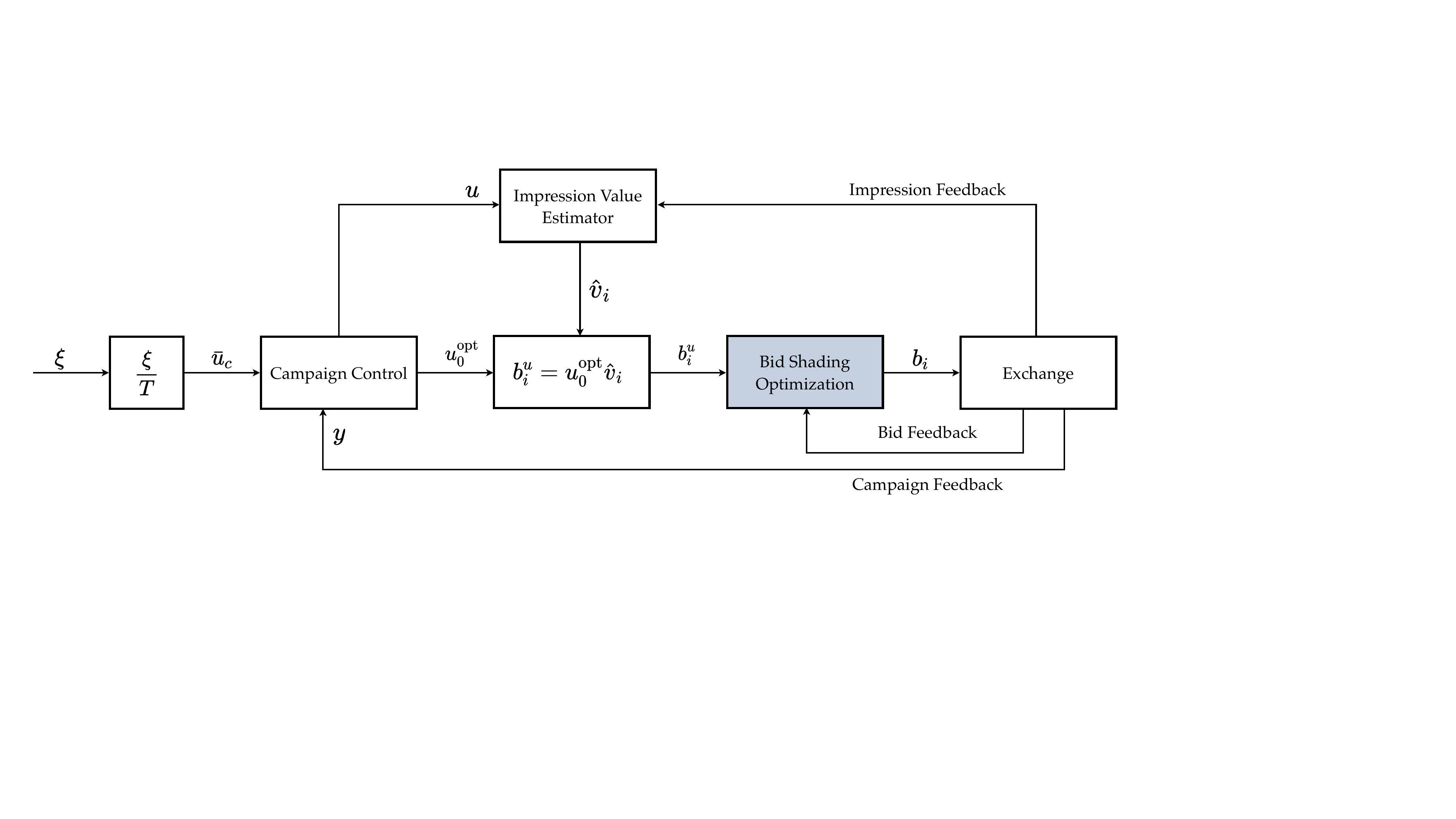}
    \caption{Closed-loop feedback control architecture for online advertising. The bid shading optimization method proposed in this work is for the highlighted block.}
    \label{fig:closed_loop}
\end{figure*}

Fig.~\ref{fig:closed_loop} illustrates a closed-loop feedback control architecture for this process. It captures the dynamic interactions between campaign-level control, impression valuation, bid shading, and the ad exchange. In this work, we focus on the \emph{Bid Shading Optimization} module (highlighted), which adapts the bid $b_i$ based on estimated valuations $\hat{v}_i$ and the optimal control signal $u_0^{\mathrm{opt}}$ (inverse of Lagrange multiplier). The remaining modules---campaign control, valuation, and feedback processing---are treated as given, with dynamics informed by prior literature \cite{karlsson2021adaptive,karlsson2024optimization,he2019identification,karlsson2014adaptive,karlsson2020feedback}. In particular, the campaign-level budget control is implemented as a feedback mechanism that adjusts spending over time. To ensure delivery of the campaign budget throughout the campaign horizon, the control signal \(u_0^{\mathrm{opt}}(t)\) is dynamically adjusted. 

The reference input \(\bar{u}_c(t)\) is injected into the campaign control block (see Fig.~\ref{fig:closed_loop}), representing the desired per-period spend rate. The total budget \(\xi\) is divided across \(T\) sample periods to define the setpoint 
$\bar{u}_c(t) = \xi/T.$
We consider a nonlinear, time-varying, and stochastic plant model. This model is an adjusted version of those presented in prior work~\cite{karlsson2014adaptive, karlsson2020feedback, karlsson2024optimization}, and is designed to capture key features of the auction environment while simplifying some internal components. Specifically, we simulate the observed spend dynamics as:
\begin{equation}
    y(t+1) = (1 + h(t)) \left(b_i(t) \mathbb{I}_{\{y_i(t) = 1\}}\right) (1 + w(t)), \label{actualSpend}
\end{equation}
\noindent where \(b_i(t)\) is the final bid submitted at time \(t\), \(y_i(t) \in \{0,1\}\) indicates whether the bid was successful, \(h(t)\) models periodic seasonality, and \(w(t)\) is multiplicative zero-mean noise. The bid \(b_i(t)\) is generated by the bid shading module based on the unshaded bid 

\begin{equation}
    b_i^u(t) = u_0^{\mathrm{opt}}(t)\hat{v}_i, \label{unshadedbid}
\end{equation}
where \(\hat{v}_i\) is the estimated impression value, and \(u_0^{\mathrm{opt}}(t)\) is the campaign-level control signal.

Our main contribution lies in formulating the bid shading problem as a convex optimization over the space of probability measures, replacing the explicit estimators and greedy maximization steps used in prior work~\cite{karlsson2021adaptive}. In contrast to prior methods, the bid distribution in our framework is iteratively computed using an entropy-regularized Wasserstein-proximal update, and we show that each such update admits a closed form solution. The latter facilitates practical implementation. 

Before introducing the proposed approach, we provide a brief overview of the baseline bid shading architecture.
\begin{theorem}\cite[Theorem 1]{karlsson2021adaptive}
Let $F_{B_i^{*}}$ denote the CDF of the highest competing bid for impression $i \in \Omega$. Then, the optimal bidding strategy corresponds to a pair $(u_0^{\mathrm{opt}}, b_i^{\mathrm{opt}})$ solving the coupled system:
\begin{align}
u_0^{\mathrm{opt}} &= \argmax\limits_{u_0 \geq 0} \left(\mathbb{E}[C] - \xi\right)^2, \label{eq:u0_opt} \\
b_i^{\mathrm{opt}} &= \argmax\limits_{b_i \geq 0} \left(u_0^{\mathrm{opt}} v_i - b_i\right) F_{B_i^{*}}(b_i), \quad \forall i \in \Omega, \label{eq:bi_opt}
\end{align}
where $\mathbb{E}[C]$ is the expected total cost induced by the bids $b_i$. If the objective in \eqref{eq:bi_opt} admits a unique maximizer for all $i\in\Omega, u_0 \in \mathbb{R}_{\geq 0}$, then the solution (i.e., the optimal bid shading strategy) is unique.
\end{theorem}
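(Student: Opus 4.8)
The plan is to prove this by a Lagrangian duality argument combined with an interchange-of-optimization step. The primal problem maximizes $\mathbb{E}[V]$ subject to $\mathbb{E}[C] \le \xi$ over the collection of bids $\{b_i\}_{i\in\Omega}$. First I would form the Lagrangian $\mathcal{L}(\{b_i\}, \lambda) = \mathbb{E}[V] - \lambda(\mathbb{E}[C] - \xi)$ for a multiplier $\lambda \ge 0$, and observe that, because the budget constraint couples the impressions only through the scalar expected cost, the inner maximization over $\{b_i\}$ decouples across $i\in\Omega$: for each impression we separately solve $\max_{b_i \ge 0} \bigl(v_i\,\mathbb{P}(b_i \ge B_i^*) - \lambda\, b_i\,\mathbb{P}(b_i \ge B_i^*)\bigr)$. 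Since $\mathbb{P}(b_i \ge B_i^*) = F_{B_i^*}(b_i)$ by the continuity assumption on the CDF, each per-impression subproblem is exactly $\max_{b_i\ge 0}\bigl(v_i - \lambda b_i\bigr)F_{B_i^*}(b_i)$. Writing $u_0 := 1/\lambda$ (the ``inverse Lagrange multiplier'' alluded to in the text) and scaling the objective by the positive constant $u_0$ gives $\max_{b_i\ge 0}\bigl(u_0 v_i - b_i\bigr)F_{B_i^*}(b_i)$, which is precisely \eqref{eq:bi_opt}.

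\textbf{Next} I would pin down the multiplier via the dual / complementary-slackness conditions. The dual function is $g(\lambda) = \lambda\xi + \sum_{i}\max_{b_i\ge 0}(v_i - \lambda b_i)F_{B_i^*}(b_i)$, and strong duality together with the KKT conditions forces the optimal $\lambda$ (equivalently $u_0^{\mathrm{opt}}$) to make the budget constraint tight, $\mathbb{E}[C] = \xi$, whenever the constraint is active; this is what the squared-residual characterization \eqref{eq:u0_opt}, $u_0^{\mathrm{opt}} = \argmax_{u_0\ge 0}(\mathbb{E}[C]-\xi)^2$ — which I read as the value of $u_0$ that zeroes $(\mathbb{E}[C]-\xi)^2$, i.e. drives $\mathbb{E}[C]$ to $\xi$ — encodes. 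One must be a little careful that $\mathbb{E}[C]$ here is understood as a function of $u_0$ through the induced optimal bids $b_i^{\mathrm{opt}}(u_0)$, so that \eqref{eq:u0_opt} and \eqref{eq:bi_opt} genuinely form a coupled fixed-point system rather than two independent optimizations; I would state this dependence explicitly before invoking the characterization. Monotonicity of $\mathbb{E}[C]$ in $u_0$ (larger $u_0$ means more aggressive bids, hence higher expected spend) guarantees existence of such a $u_0^{\mathrm{opt}}$, at least when $\xi$ is in the attainable range of expected costs.

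\textbf{For the uniqueness claim}, I would argue as follows. If for every $i\in\Omega$ and every $u_0 \in \mathbb{R}_{\ge 0}$ the map $b_i \mapsto (u_0 v_i - b_i)F_{B_i^*}(b_i)$ has a unique maximizer $b_i^{\mathrm{opt}}(u_0)$, then the inner-problem solution is a well-defined single-valued function of $u_0$. Consequently $\mathbb{E}[C]$ becomes a genuine function of the scalar $u_0$, and the outer problem \eqref{eq:u0_opt} reduces to solving a scalar equation; combined with the monotonicity just mentioned, this pins down $u_0^{\mathrm{opt}}$ uniquely, and then $b_i^{\mathrm{opt}} = b_i^{\mathrm{opt}}(u_0^{\mathrm{opt}})$ is uniquely determined for all $i$. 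I would also need to confirm that no duality gap occurs — e.g. by noting the primal is a linear program in the ``allocation'' variables $x_i := \mathbb{P}(b_i \ge B_i^*) \in [0,1]$ once one changes variables from $b_i$ to the win probability (using that $F_{B_i^*}$ is continuous, hence its generalized inverse lets us write $b_i = F_{B_i^*}^{-1}(x_i)$), so Slater-type conditions or LP duality apply.

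\textbf{The main obstacle} I anticipate is the interpretation and rigor of \eqref{eq:u0_opt}: literally read, $\argmax (\mathbb{E}[C]-\xi)^2$ is maximizing a nonnegative quantity, which is the wrong direction unless one knows $\mathbb{E}[C]$ stays below $\xi$ for all feasible $u_0$ (so the max is attained at the boundary where the constraint binds), or unless the intended meaning is $\argmin$ / root-finding. Reconciling this with the actual optimality conditions — i.e. showing that the stated fixed-point system really does characterize the constrained optimum, including the edge case where the budget constraint is slack at optimum (then $u_0^{\mathrm{opt}}$ should be the unconstrained maximizer, $\lambda = 0$, which the squared-residual form does not obviously capture) — is the delicate part and would require either an additional regularity assumption or a careful case split. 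The decoupling and the per-impression reduction to \eqref{eq:bi_opt}, by contrast, I expect to be routine.
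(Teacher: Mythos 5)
This statement is quoted from \cite[Theorem 1]{karlsson2021adaptive}; the present paper does not reprove it, so there is no in-paper proof to compare against. Your Lagrangian route is in fact the standard derivation behind the cited result: relax the budget constraint with a multiplier $\lambda$, note the per-impression decoupling, identify $u_0=1/\lambda$ as the inverse multiplier, and rescale to obtain \eqref{eq:bi_opt}; and your reading of \eqref{eq:u0_opt} as a root-finding condition ($\mathbb{E}[C](u_0)=\xi$, i.e.\ $\argmin$ rather than $\argmax$ of the squared residual, with $\mathbb{E}[C]$ understood as a function of $u_0$ through $b_i^{\mathrm{opt}}(u_0)$) is the correct one --- the original reference states it that way, and the $\argmax$ here is best read as a transcription slip. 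Your uniqueness argument (single-valued $b_i^{\mathrm{opt}}(u_0)$ plus monotonicity of $u_0\mapsto\mathbb{E}[C](u_0)$ pinning down the scalar root) is also the right shape.

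The one genuine gap is your appeal to strong duality via an LP reformulation. After the change of variables $x_i=F_{B_i^*}(b_i)$, the value $\mathbb{E}[V]=\sum_i v_i x_i$ is linear, but the cost $\mathbb{E}[C]=\sum_i F_{B_i^*}^{-1}(x_i)\,x_i$ is not, so the primal is not a linear (or even convex) program and Slater/LP duality does not apply; indeed $(u_0v_i-b_i)F_{B_i^*}(b_i)$ is generally nonconcave in $b_i$. The fix is that no strong duality is needed: a Lagrangian-sufficiency (Everett-type) argument suffices. If $\{b_i^{\mathrm{opt}}\}$ maximizes the Lagrangian pointwise for some $\lambda\geq 0$ and the induced cost satisfies $\mathbb{E}[C]=\xi$ (or $\lambda=0$ with $\mathbb{E}[C]\leq\xi$), then for any feasible $\{b_i\}$, weak duality gives $\mathbb{E}[V]\leq \mathbb{E}[V^{\mathrm{opt}}]-\lambda(\xi-\mathbb{E}[C])\leq\mathbb{E}[V^{\mathrm{opt}}]$, establishing optimality without any convexity assumption. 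You should also make the rescaling step explicit only for $\lambda>0$ and treat the slack-budget case $\lambda=0$ separately, as you anticipate; with that substitution your outline is sound and matches the argument underlying the cited theorem.
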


\begin{remark}
In the case of a \emph{second-price auction}, where the winner pays the highest competing bid rather than their own bid, the optimal strategy simplifies to $ b_i^{\mathrm{opt}} = u_0^{\mathrm{opt}} v_i,$
removing the need to consider the distribution of $B_i^{*}$ in the optimization. In our formulation, we refer to this quantity as the \emph{unshaded bid} $b_i^u$, which serves as the input to a bid shading process to determine the actual submitted bid.
\end{remark}

\noindent
The relationship between the unshaded bid and the actual submitted bid $b_i$ is commonly expressed in the literature via a parametric form \cite{karlsson2021adaptive, gligorijevic2020bid}:

\begin{equation}
b_i(\theta_1, \theta_2, \bm{a})= \begin{cases}\dfrac{\log \left(1+\theta_1 \theta_2 b_i^u(\bm{a})\right)}{\theta_2}, & \theta_2>0, \\ \theta_1 b_i^u(\bm{a}), & \theta_2=0,\end{cases}
   \label{shadingb}
\end{equation}
where $\theta_1 \in [0,1]$ and $\theta_2 \geq 0$ are parameters, and \(\bm{a}\in\mathbb{R}^{a}\) is a given publisher/user attribute. This function is continuous in both $\theta_1$ and $\theta_2$, and it is concave monotonically increasing in $b_i^u$ \eqref{unshadedbid}. The conventional approach, used as an industry benchmark, treats these as segment-specific shading parameters \cite{gligorijevic2020bid}. Segments are manually pre-defined (e.g., by exchange, domain, and device), and a separate set of parameters $(\theta_1, \theta_2)$ is identified for each segment, typically via a recursive least square or similar pointwise optimization \cite{karlsson2021adaptive}. In this work, we consider this parametric function \eqref{shadingb}, but instead of finding a single optimal pair $(\theta_1, \theta_2)$ for a pre-defined segment, we consider the joint probability measure or distribution over the parameter space, and compute this distribution as measure-valued proximal update. 

At each iteration, we sample candidate $(\theta_1,\theta_2)$ pairs from the current distribution and compute the bid $b_i$. After the impression is served, we observe the realized \emph{surplus} $S:=V-C$, where $V,C$ are as in \eqref{defVandC}. The surplus $S$ serves as the feedback signal to update the parameter distribution. This leads to a recursive, closed-form update that balances empirical performance with distributional regularity, without requiring the knowledge of the CDF $F_{B_i^{*}}$.

Next, we formalize this idea and detail the measure-valued proximal update, which governs how the parameter distribution evolves over time in response to observed outcomes.
 
\section{Measure-valued Proximal Update}\label{sec:ProxUpdate}
\subsection{Background} 
Let $\mathcal{P}_{2}(\mathbb{R}^{d})$ denote the space of Borel probability measures over $\mathbb{R}^{d}$ with finite second moments. Notice that $\mathcal{P}_2$ is not a vector space, but an infinite dimensional manifold.



In our context, we seek to find a probability measure $\mu(\theta_1,\theta_2)\in\mathcal{P}_{2}(\mathbb{R}^2)$ supported over the shading parameters $(\theta_1, \theta_2) \in \mathbb{R}^2$, that minimizes a suitable functional $\Phi(\mu)$ encoding both performance and regularity. This leads us to consider optimization problems of the form
\begin{align}
\underset{\mu \in \mathcal{P}_2(\mathbb{R}^2)}{\arg\min} \:\Phi(\mu),
\label{OptimizationMeasure}    
\end{align}
where the objective \( \Phi: \mathcal{P}_2(\mathbb{R}^2) \rightarrow \mathbb{R} \) is assumed to be proper lower semi-continuous and convex\footnote{along generalized geodesic w.r.t. the Wasserstein metric \cite{ambrosio2008gradient}} in $\mu$. 

\medskip
\noindent
\textbf{Wasserstein metric and its entropic regularization.} The \emph{squared Wasserstein distance} between a pair of probability measures $\mu_{\bm{x}},\mu_{\bm{y}}\in\mathcal{P}_{2}\left(\mathbb{R}^{d}\right)$, is
\begin{align}
W^{2}\left(\mu_x,\mu_y\right) := \underset{\pi\in\Pi\left(\mu_x,\mu_y\right)}{\min}\:\displaystyle\int_{\mathbb{R}^{2d}}c\left(\bm{x},\bm{y}\right)\:\differential\pi(\bm{x},\bm{y}),
\label{DefWassContinuous}    
\end{align}
where $\Pi\left(\mu_x,\mu_y\right)$ is the set of joint probability measures or couplings over the product space $\mathbb{R}^{2d}$, having $\bm{x}$ marginal $\mu_x$, and $\bm{y}$ marginal $\mu_{\bm{y}}$. We use the \emph{ground cost} $c\left(\bm{x},\bm{y}\right):=\|\bm{x}-\bm{y}\|_{2}^{2}$, the squared Euclidean distance in $\mathbb{R}^{d}$. It is well-known that $W$ is a metric over the manifold $\mathcal{P}_{2}(\mathbb{R}^{d})$.


The entropy-regularized variant of \eqref{DefWassContinuous}, called \emph{entropy-regularized squared Wasserstein distance}, is  
\begin{align}
&W_{\varepsilon}^{2}\left(\mu_x,\mu_y\right) :=\underset{\pi\in\Pi\left(\mu_x,\mu_y\right)}{\min}\bigg\{\displaystyle\int_{\mathbb{R}^{2d}}c\left(\bm{x},\bm{y}\right)\:\differential\pi(\bm{x},\bm{y})  \nonumber \\
& \qquad +\varepsilon \displaystyle\int_{\mathbb{R}^{2d}}\log\left(\dfrac{\differential\pi(\bm{x},\bm{y})}{\differential\mu_{\bm{x}}(\bm{x})\differential\mu_{\bm{y}}(\bm{y})}\right)\differential\pi(\bm{x},\bm{y})\bigg\},
\label{DefRegWassContinuous}    
\end{align}
where $\varepsilon > 0$ is the regularization parameter. Notice that $W_{\varepsilon}\geq 0$ and is symmetric w.r.t. $\mu_x,\mu_y$, but does not satisfy other axioms (triangle inequality, indiscenrability) of a metric \cite[p. 1]{burago2001course}. As expected, $W_{\varepsilon}\rightarrow W$ as $\varepsilon\downarrow 0$.  

Due to strict convexity w.r.t. $\pi$, the regularized variant \eqref{DefRegWassContinuous} is computationally more convenient than \eqref{DefWassContinuous}, and is widely used in measure-valued variational problems \cite{benamou2015iterative, carlier2017convergence, peyre2015entropic, cuturi2016smoothed,caluya2019gradient,caluya2021wasserstein,nodozi2023wasserstein}. We will use the same in the developments that follow.

\medskip
\noindent
\textbf{Wasserstein Proximal Operator.} Under the stated assumptions on the functional $\Phi$, the minimization \eqref{OptimizationMeasure} can be performed as a steepest descent w.r.t. the $W$ metric in \eqref{DefWassContinuous}. This is realized via the time-stepping
\begin{align}
\mu^k = \underbrace{\underset{{\mu \in \mathcal{P}_2(\mathbb{R}^{2})}}{\arg\min} \left[ \frac{1}{2} W^2(\mu, \mu^{k-1}) + h \Phi\right]}_{=:\prox^{W}_{h\Phi}\left(\mu^{k-1}\right)}, \quad k\in\mathbb{N},
\label{muRecursionGeneral}
\end{align}
where \( h > 0 \) is the step size. The sequence of measures $\{\mu^{k-1}\}_{k\in\mathbb{N}}$ generated by \eqref{muRecursionGeneral} is guaranteed to converge to the minimizer of \eqref{OptimizationMeasure}. We refer to $\prox^{W}_{h\Phi}\left(\cdot\right)$ as the \emph{Wasserstein-proximal operator}, and its fixed point is the minimizer of \eqref{OptimizationMeasure}. Such recursions appeared earlier in stochastic prediction \cite{caluya2019proximal,halder2020hopfield,halder2022stochastic} and filtering \cite{halder2017gradient,halder2018gradient,halder2019proximal}. They generalize the finite dimensional proximal recursion \cite{parikh2014proximal} and operator \cite{rockafellar1976monotone} $\prox^{\mathrm{dist}_{\bm{G}}}_{h\phi}\left(\cdot\right)$, realizing natural gradient descent \cite{amari1998natural} w.r.t. $\mathrm{dist}_{\bm{G}}$ induced by the underlying metric tensor $\bm{G}$. The case $\bm{G}\equiv\bm{I}$ is the Euclidean proximal update.

When $W^2$ in \eqref{muRecursionGeneral} is replaced with its entropy-regularized variant $W_{\varepsilon}^2$, we denote the corresponding proximal operator as $\prox^{W_{\varepsilon}}_{h\Phi}\left(\cdot\right)$. For computational convenience, we use $\prox^{W_{\varepsilon}}_{h\Phi}\left(\cdot\right)$ with small $\varepsilon>0$ as a proxy for $\prox^{W}_{h\Phi}\left(\cdot\right)$.


\subsection{Main Idea}

We propose a Wasserstein-proximal update of the form \eqref{muRecursionGeneral} for bid shading. Specifically, given publisher and user attributes\footnote{attributes may include publisher domain, content category, ad slot type (e.g., banner/video, above/below fold), and user-side features such as device, location, browser, demographics.} \(\bm{a}\in\mathbb{R}^{a}\), the advertiser at time step $k\in\mathbb{N}$, performs the update \eqref{muRecursionGeneral} with the functional \( \Phi\equiv \Phi^k(\mu \mid \bm{a}) \), an energy-like functional conditional on the attribute \(\boldsymbol{a}\). 

We define this functional at step \( k\in\mathbb{N} \) as
\begin{align}
&\Phi^k(\mu \mid \bm{a}) :=-\mathbb{E}_{(\theta_1, \theta_2) \sim \mu} \big[\hat{f}_k(\theta_1, \theta_2\mid \bm{a})\cdot\nonumber\\ &\qquad\qquad\qquad\qquad\left(\hat{v}_i^k(\theta_1, \theta_2) - b_i(\theta_1, \theta_2,\bm{a})\right) \big],
\label{EnergyFuntional}
\end{align}
which encourages the bid distribution to place more weight on values with higher expected surplus, i.e., where the win probability \( \hat{f}_k(\theta_1, \theta_2\mid\bm{a}) \) and value gap \( \big(\hat{v}_i^k(\theta_1, \theta_2) - b_i(\theta_1, \theta_2,\bm{a})\big) \) are both large. Intuitively, $\mu$ should be updated in a way that lowers the value of the functional $\Phi^{k}$ in \eqref{EnergyFuntional}. 

In this work, we model the win probability as
\begin{equation}
\begin{aligned}
 &\hat{f}_k(\theta_1, \theta_2 \mid \boldsymbol{a})
=
\sigma\!\Big(w_0 + \langle\boldsymbol{w},\boldsymbol{a}\rangle + \beta\, \log b_i(\theta_1,\theta_2,\boldsymbol{a})\Big),
\label{eq:winprob} 
\end{aligned}
\end{equation}
where
$\sigma(\cdot):=1/\left(1+\exp(-\cdot)\right)$, the standard logistic function. We train the parameters \((w_0,\boldsymbol{w},\beta)\in\mathbb{R}\times\mathbb{R}^{a}\times\mathbb{R}_{>0}\) via exponentially weighted logistic regression on \((\boldsymbol{a}_i,b_i,y_i)\).

The proposed Wasserstein-proximal update \eqref{muRecursionGeneral} is closely related to Wasserstein gradient flow \cite{ambrosio2008gradient}, \cite{santambrogio2017euclidean}. Specifically, for the functional \( \Phi^k: \mathcal{P}_2(\mathbb{R}^2) \rightarrow \mathbb{R} \), its \emph{Wasserstein gradient} \cite[Ch.~8]{ambrosio2008gradient} evaluated at \( \mu \in \mathcal{P}_2(\mathbb{R}^2) \), is
\begin{equation}
\nabla^W \Phi^k(\mu) := -\nabla \cdot \left( \mu \nabla \frac{\delta \Phi^k}{\delta \mu} \right),
\end{equation}
where \( \nabla \) denotes the Euclidean gradient and \( \frac{\delta}{\delta \mu} \) is the functional derivative of \( \Phi_k \). As the step size \( h \downarrow 0 \), the sequence $\{\mu^{k-1}(h)\}_{k\in\mathbb{N}}$ generated by \eqref{muRecursionGeneral} converges to the trajectory \( \mu(t, \cdot) \) governed by the Wasserstein gradient flow:
\begin{equation}
\frac{\partial \mu}{\partial t} = -\nabla^W \Phi^k(\mu), \quad \mu(t = 0, \cdot) = \mu^0(\cdot).
\label{eq:WassGradFlow}
\end{equation}
Thus, the proposed Wasserstein-proximal update can be interpreted as a time-discretized approximation of the Wasserstein gradient flow \eqref{eq:WassGradFlow}. Each proximal step performs an implicit Euler step in the metric space $\left(\mathcal{P}_2\left(\mathbb{R}^2\right),W\right)$.

\begin{remark}\label{RemarkPhikIslinearInMu}
A key observation is that the functional $\Phi^k$ in \eqref{EnergyFuntional} is linear in the bid shading parameter distribution $\mu\in\mathcal{P}_2\left(\mathbb{R}^{2}\right)$. Building on this, we show in Sec. \ref{subsec:Results} that the corresponding regularized Wasserstein-proximal update \( \mu^k \) in \eqref{muRecursionGeneral} can be computed in closed form.
\end{remark}


\subsection{Results}\label{subsec:Results}

Henceforth, we focus on discrete  state space with finite $N$ samples. So for each fixed $k \in \mathbb{N}$,
\begin{align}
\bm{\mu}^{k-1} \in \Delta^{N-1} := \left\{\bm{p} \in \mathbb{R}^{N} \;\middle|\; \bm{p} \geq \bm{0},\; \langle\bm{1},\bm{p}\rangle = 1 \right\}.
\label{DefDiscreteMuVector}
\end{align}
For $\bm{p}, \bm{q} \in \Delta^{N-1}$, let the \emph{transport polytope}
\begin{align}
   & \Pi_{N}\left(\bm{p}, \bm{q} \right)\!:=\!\{ \bm{M} \in \mathbb{R}^{N \times N} \mid \bm{M} \geq 0,\bm{M}\bm{1} = \bm{p},\bm{M}^{\top}\bm{1} = \bm{q}\},
\label{defTransportPolytope}    
\end{align}
and the \emph{Euclidean ground cost matrix} $\bm{C} \in \mathbb{R}_{\geq 0}^{N \times N}$ with entries $\bm{C}(i, j) := \|\bm{\theta}_i - \bm{\theta}_j\|_2^2$ for samples $\{\bm{\theta}_j\}_{j\in [N]} \subset \mathbb{R}^2$.

Then, the discrete version of the entropy-regularized Wasserstein-proximal update $\mu^{k}=\prox^{W_{\varepsilon}}_{h\Phi}\left(\mu^{k-1}\right)$ becomes
\begin{align}
&\bm{\mu}^{k} = \prox^{W_{\varepsilon}}_{h\Phi^k}\left(\bm{\mu}^{k - 1}\right)\nonumber \\
&=\underset{\bm{\mu}\in\Delta^{N-1}}{\arg\min}\bigg\{\underset{\bm{M}\in\Pi_{N}\left(\bm{\mu},\bm{\mu}^{k-1}\right)}{\min}\bigg\langle\frac{1}{2}\bm{C} + \varepsilon\log\bm{M},\bm{M}\bigg\rangle \nonumber \\
& \qquad \qquad \quad + h\Phi^k(\bm{\mu}\mid\bm{a})\bigg\}, \quad \forall k\in\mathbb{N}. \label{MuUpdateDiscreteRegularized}
\end{align}
Theorem \ref{Thm:WassProxOfLinear} next shows that when $\Phi^k(\bm{\mu}\mid\bm{a})$ is linear in $\bm{\mu}$, the $\bm{\mu}^{k}$ update in \eqref{MuUpdateDiscreteRegularized} can be computed in closed form.

\begin{theorem}\label{Thm:WassProxOfLinear} Let \( \Phi^k(\bm{\mu}) := \langle \bm{r}^{k}, \bm{\mu}\rangle \) for given \( \bm{r}^{k}\in \mathbb{R}^N\setminus\{\bm{0}\}, k\in\mathbb{N}\), defined \( \forall\bm{\mu} \in \Delta^{N-1} \). Given Euclidean ground cost matrix \(\bm{C}\in \mathbb{R}_{\geq 0}^{N \times N} \), and regularization parameter \( \varepsilon > 0 \), let 
$\bm{K} := \exp\left( -\bm{C}/(2\varepsilon)\right)$ where $\exp$ acts elementwise. Then, for any probability vector \( \bm{\mu}^{k - 1} \in \Delta^{N-1} \) and step size \( h > 0 \), the update \eqref{MuUpdateDiscreteRegularized} is given by
\begin{equation}
\begin{aligned}
&\bm{\mu}^{k} =\exp\!\left(-h\bm{r}^{k}/\varepsilon\right)\!\odot\! \left( \bm{K}^\top\!\!\left( \bm{\mu}^{k - 1}\!\oslash\!\left( \bm{K} \exp\left(-h\bm{r}^{k}/\varepsilon\right) \right)\right)\right),
\label{eq:ClosedFormWassProx}
\end{aligned}
\end{equation}
where \( \odot \) and \( \oslash \) denote elementwise multiplication and division, respectively.
\end{theorem}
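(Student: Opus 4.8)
The plan is to eliminate the inner and outer minimizations in \eqref{MuUpdateDiscreteRegularized} by optimizing directly over the coupling $\bm{M}$, then solve the resulting strictly convex entropic program through its first-order optimality conditions and read off the row marginal. First I would use that $\bm{\mu}=\bm{M}\bm{1}$ for every $\bm{M}\in\Pi_N(\bm{\mu},\bm{\mu}^{k-1})$, so the linear term is $h\Phi^k(\bm{\mu})=h\langle\bm{r}^{k},\bm{M}\bm{1}\rangle=\langle h\,\bm{r}^{k}\bm{1}^{\top},\bm{M}\rangle$, a function of $\bm{M}$ alone, while $\bigcup_{\bm{\mu}\in\Delta^{N-1}}\Pi_N(\bm{\mu},\bm{\mu}^{k-1})=\{\bm{M}\in\mathbb{R}^{N\times N}:\bm{M}\geq\bm{0},\,\bm{M}^{\top}\bm{1}=\bm{\mu}^{k-1}\}$, since the row sums of any such $\bm{M}$ automatically form a probability vector ($\bm{1}^{\top}\bm{M}\bm{1}=\langle\bm{1},\bm{\mu}^{k-1}\rangle=1$). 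Thus \eqref{MuUpdateDiscreteRegularized} reduces to minimizing $G(\bm{M}):=\big\langle\tfrac12\bm{C}+h\,\bm{r}^{k}\bm{1}^{\top}+\varepsilon\log\bm{M},\,\bm{M}\big\rangle$ over $\{\bm{M}\geq\bm{0}:\bm{M}^{\top}\bm{1}=\bm{\mu}^{k-1}\}$, with $\bm{\mu}^{k}=\bm{M}^{\star}\bm{1}$ for the minimizer $\bm{M}^{\star}$.

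Next I would show this program has a unique minimizer with strictly positive entries (taking $\bm{\mu}^{k-1}$ of full support; otherwise the forced zero columns are removed and the argument runs on the support). The objective $G$ is continuous on the compact feasible polytope (with $0\log0:=0$) and strictly convex because $x\mapsto x\log x$ is; strict positivity of $\bm{M}^{\star}$ follows from the entropic-barrier effect $\tfrac{d}{d\delta}\,\varepsilon\delta\log\delta\to-\infty$ as $\delta\downarrow0$, so shifting a sliver of mass into any currently-zero feasible entry strictly lowers $G$ (here $\bm{C}$ finite, i.e. $\bm{K}>\bm{0}$ entrywise, matters). Since $\bm{M}^{\star}$ is interior, first-order optimality reduces to $\nabla G(\bm{M}^{\star})$ being orthogonal to the feasible directions: there exists $\bm{\lambda}\in\mathbb{R}^{N}$ with $\tfrac12\bm{C}(i,j)+h\,\bm{r}^{k}(i)+\varepsilon\big(\log\bm{M}^{\star}(i,j)+1\big)=\lambda_j$ for all $i,j$ (the simplex constraint on the row marginal needs no multiplier, being implied by the column constraint). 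Solving for $\bm{M}^{\star}(i,j)$ yields the diagonal-scaling form $\bm{M}^{\star}=\operatorname{diag}(\bm{u})\,\bm{K}\,\operatorname{diag}(\bm{v})$ with $\bm{u}=\exp(-h\bm{r}^{k}/\varepsilon)$ and $\bm{v}(j)=\exp(\lambda_j/\varepsilon-1)$.

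Finally I would pin down $\bm{v}$ and assemble the result. The simplification coming from linearity of $\Phi^{k}$ is that $\bm{u}$ is fixed outright by the data, so---unlike the two-marginal Sinkhorn problem---only the scaling $\bm{v}$ remains, and it is determined non-iteratively by the single imposed marginal: $\bm{M}^{\star\top}\bm{1}=\bm{\mu}^{k-1}$ reads $\bm{v}\odot(\bm{K}^{\top}\bm{u})=\bm{\mu}^{k-1}$, hence $\bm{v}=\bm{\mu}^{k-1}\oslash(\bm{K}^{\top}\bm{u})$, which is well defined since $\bm{K},\bm{u}>\bm{0}$. Then $\bm{\mu}^{k}=\bm{M}^{\star}\bm{1}=\bm{u}\odot(\bm{K}\bm{v})=\bm{u}\odot\!\big(\bm{K}(\bm{\mu}^{k-1}\oslash(\bm{K}^{\top}\bm{u}))\big)$, and since the Euclidean ground cost is symmetric ($\bm{C}(i,j)=\|\bm{\theta}_i-\bm{\theta}_j\|_2^2=\bm{C}(j,i)$) we have $\bm{K}=\bm{K}^{\top}$, which turns this into exactly \eqref{eq:ClosedFormWassProx}.

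The main obstacle is not the algebra but making the stationarity step legitimate: since those equations involve $\log\bm{M}$, one must know beforehand that the minimizer lies in the relative interior, so the entropic-barrier positivity argument---together with existence and uniqueness from compactness and strict convexity---is the load-bearing step. A secondary point worth stating carefully is the reduction of the nested minimization to the single program above and the verification that the row-marginal simplex constraint holds automatically; both are elementary, but they are the places where a slip would most easily go unnoticed.
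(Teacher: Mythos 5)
Your proposal is correct, but it takes a genuinely different route from the paper. The paper does not re-derive the optimality system: it invokes the known dual characterization of the entropy-regularized Wasserstein proximal update for generic convex $\Phi^k$ (citing \cite[Lemma 3.5]{karlsson2017generalized} and \cite[Theorem 1]{caluya2019gradient}), which expresses $\bm{\mu}^{k}$ through two dual vectors $\bm{\lambda}_0^{\rm{opt}},\bm{\lambda}_1^{\rm{opt}}$ solving a coupled system; linearity of $\Phi^k$ enters only through the Legendre--Fenchel conjugate of $\langle\bm{r}^k,\cdot\rangle$ being the indicator of $\{\bm{\lambda}_1=-\bm{r}^k\}$, which pins $\bm{\lambda}_1^{\rm{opt}}=-\bm{r}^k$, after which the marginal equation yields $\exp(h\bm{\lambda}_0^{\rm{opt}}/\varepsilon)=\bm{\mu}^{k-1}\oslash(\bm{K}\exp(-h\bm{r}^k/\varepsilon))$ and the closed form follows by substitution. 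You instead prove the result from scratch: you collapse the nested minimization into a single entropic program over couplings with only the column-marginal constraint $\bm{M}^{\top}\bm{1}=\bm{\mu}^{k-1}$ (correctly observing that the row marginal $\bm{M}\bm{1}$ is automatically a probability vector), establish existence, uniqueness, and interiority of the minimizer via compactness, strict convexity, and the entropic barrier, and then read off the diagonal-scaling form $\bm{M}^\star=\operatorname{diag}(\bm{u})\bm{K}\operatorname{diag}(\bm{v})$ with $\bm{u}=\exp(-h\bm{r}^k/\varepsilon)$ fixed outright and $\bm{v}$ determined non-iteratively by the single imposed marginal; symmetry of the Euclidean cost ($\bm{K}=\bm{K}^\top$) then reconciles your expression with \eqref{eq:ClosedFormWassProx}. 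Your derivation is self-contained and makes transparent \emph{why} linearity removes the Sinkhorn iteration (only one scaling vector is unknown), and you correctly flag the interiority/positivity step and the support caveat when $\bm{\mu}^{k-1}$ has zero entries; the paper's proof buys brevity by outsourcing exactly these KKT/duality justifications to the cited results. In effect, your primal KKT analysis restricted to linear $\Phi^k$ reproduces the specialization of the cited dual system, so the two arguments are consistent and lead to the same formula.
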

\begin{proof}
For generic $\Phi^{k}$ convex in $\bm{\mu}\in\Delta^{N-1}$, it is known \cite[Lemma 3.5]{karlsson2017generalized},\cite[Theorem 1]{caluya2019gradient} that the entropy-regularized Wasserstein proximal update 
\begin{align}
&\bm{\mu}^{k} =\exp\left(h\bm{\lambda}_{1}^{\rm{opt}}/\varepsilon\right) \odot \left(\exp\left(-\bm{C}^{\top}/(2\varepsilon)\right) \exp\left(h\bm{\lambda}_{0}^{\rm{opt}}/\varepsilon\right)\right),
\label{MuUpdateGeneral}    
\end{align}
where the optimal dual variables $\bm{\lambda}_{0}^{\rm{opt}},\bm{\lambda}_{1}^{\rm{opt}}\in\mathbb{R}^{N}$ solve the coupled system
\begin{subequations}
\label{lambda0lambda1equations} 
\begin{align}
&\exp\!\left(h\bm{\lambda}_{0}^{\rm{opt}}/\varepsilon\right)\!\odot\!\left( \exp\!\left(-\bm{C}/(2\varepsilon)\right)\!\exp\!\left(h\bm{\lambda}_{1}^{\rm{opt}}/\varepsilon\right)\right) = \bm{\mu}_{k-1},  \label{ZetakEquation}\\ 
&\bm{0} \in \partial_{\bm{\lambda}_{1}^{\rm{opt}}}\Phi_k^{*}\left(-\bm{\lambda}_{1}^{\rm{opt}}\right) - \!\exp\!\left(h\bm{\lambda}_{1}^{\rm{opt}}/\varepsilon\right)\nonumber \\
&\qquad\qquad\odot\left(\exp\left(-\bm{C}^{\top}/(2\varepsilon)\right) \exp\left(h\bm{\lambda}_{0}^{\rm{opt}}/\varepsilon\right)\right), \label{ZeroInSubdifferential}  
\end{align} 
\label{CoupledSystemlambda0optlambda1opt}
\end{subequations}
wherein $\bm{0}$ denotes the $N\times 1$ column vector of zeros, $\partial_{\bm{\lambda}_{1}^{\rm{opt}}}$ denotes the subgradient, and the superscript $^{*}$ denotes the Legendre-Fenchel conjugate.

For $\Phi^k(\bm{\mu}) := \langle \bm{r}^{k}, \bm{\mu}\rangle$ with $\bm{r}^{k}\in \mathbb{R}^N\setminus\{\bm{0}\}$, its Legendre-Fenchel conjugate is an indicator function:
\begin{align}
\Phi_{k}^{*}(-\bm{\lambda}_{1}) = \begin{cases}
0 & \text{if}\quad\bm{\lambda}_{1} = -\bm{r}^{k},\\
+\infty & \text{otherwise}.
\end{cases} 
\label{LegendreFenchelConjugate}
\end{align}
So $\bm{\lambda}_{1}^{\rm{opt}} = -\bm{r}^{k}$, which combined with \eqref{ZetakEquation} gives 
\begin{align}
\exp\!\left(h\bm{\lambda}_{0}^{\rm{opt}}/\varepsilon\right) = \bm{\mu}_{k-1}\oslash\left(\bm{K}\!\exp\!\left(-h\bm{r}^{k}/\varepsilon\right)\right), \label{lambda0opt}    
\end{align}
where we used $\bm{K} := \exp\left( -\bm{C}/(2\varepsilon)\right)$.

Substituting \eqref{lambda0opt} back in \eqref{MuUpdateGeneral}, and using $\bm{\lambda}_{1}^{\rm{opt}} = -\bm{r}^{k}$ again, we arrive at \eqref{eq:ClosedFormWassProx}.
\end{proof}

\begin{remark}
Notice that evaluating \eqref{eq:ClosedFormWassProx} is of complexity $\mathcal{O}(N^2)$, and involves vectorized operations (e.g., matrix-vector, not matrix-matrix, multiplication).     
\end{remark}

Since the functional $\Phi^k(\bm{\mu}\mid\bm{a})$ in \eqref{EnergyFuntional} is indeed of the form $\langle \bm{r}^{k}, \bm{\mu}\rangle$ with the $j$th component of $\bm{r}^{k}$ being 
\begin{align}
\!\left[\bm{r}^{k}\right]_{j} \!=\! \hat{f}_k(\theta_{1j}^{k}, \theta_{2j}^{k}\mid \!\bm{a})\!\!\left(\hat{v}_i^k(\theta_{1j}^{k}, \theta_{2j}^{k}) - b_i(\theta_{1j}^{k}, \theta_{2j}^{k},\bm{a})\right)
\label{Ourrk}    
\end{align}
where the sample index $j\in[N]$, the $\hat{f}_{k}$ is given by \eqref{eq:winprob}, and $b_i$ is given by \eqref{shadingb}. As a result, the closed form \eqref{eq:ClosedFormWassProx} in Theorem \ref{Thm:WassProxOfLinear} applies. Then, the proposed entropy-regularized Wassertein-proximal update for the bid shading parameter distribution, can be implemented for each $k\in\mathbb{N}$, as outlined in Algorithm \ref{algoirithm1}.
\begin{algorithm}[H]
\caption{Compute bid shading parameter distribution via regularized Wasserstein-proximal update}
\begin{algorithmic}[1]
\State \textbf{Initialize} parameter distribution $\bm{\mu}^0 \in \Delta^{N-1}$
\For{$k = 1$ to $T$}
    \State Sample parameters $(\theta_1^{k}, \theta_2^{k}) \sim \bm{\mu}^{k-1}$
    \State Apply bid and observe outcome in $ \{0, 1\}$
    \State Update the estimated win probability $\hat{f}_k(\theta_1^{k}, \theta_2^{k} \mid \bm{a})$ using \eqref{eq:winprob} with new observed outcome
    \State Compute $\bm{r}^{k}$ using \eqref{Ourrk}
    \State $\bm{\mu}^k \gets \prox^{W_{\varepsilon}}_{h\Phi^k}(\bm{\mu}^{k - 1})$\!\! \Comment{update distribution per \eqref{eq:ClosedFormWassProx}}
\EndFor
\end{algorithmic}\label{algoirithm1}
\end{algorithm}

\section{Numerical Experiments}\label{sec:numerical}
We consider an environment where the advertiser interacts with a dynamic market over a total horizon of 408 hours. Time is discretized in steps of \(\Delta = 1/30\) hours, corresponding to \(T_{\text{day}} = 720\) steps per day. The total number of simulation steps is thus \(T_{\text{total}} = 12240\).

To compute the unshaded bid, we use Algorithm~1 from~\cite{karlsson2024optimization}, with some modifications tailored to our setting. Specifically, we model the estimated valuation of the impression opportunity $\hat{v}_i$ as
\[
\hat{v}_i = u + {\mathrm{noise}}, \quad {\mathrm{noise}} \sim \mathcal{N}(0, 0.1^2),
\]
where $u$ is the control signal coming from the campaign controller, as illustrated in Figure \ref{fig:closed_loop}. The bid $b_i$ is then computed using~\eqref{shadingb}.

To mimic the dynamics of real-world auction environments, we model  the advertiser’s binary feedback using a nonlinear function $f_{\text{true}}(\cdot):=(\Phi\circ\sin)(\cdot)$ where \(\Phi(\cdot)\) denotes the standard Gaussian CDF. Then, the binary feedback signal is generated by evaluating
\begin{align}
\text{observe}(b_i) = \mathbb{I}\left[\text{Uniform}(0,1) < f_{\text{true}}(b_i)\right],
\label{def:observe}
\end{align}
where $\mathbb{I}$ denotes the indicator function. We use \eqref{def:observe} in line 4 of Algorithm \ref{algoirithm1}. 

We define the daily budget allocation $\xi(t)$ as a piecewise constant function of time \(t\), where \(t\) is measured in \emph{hours}:
\[
\xi(t) :=
\begin{cases}
135, & t \leq 72 \; \text{(Day 1--3)}, \\
300, & 72 < t \leq 120 \; \text{(Day 4--5)}, \\
80, & 120 < t \leq 216 \; \text{(Day 6--9)}, \\
200, & 216 < t \leq 288 \; \text{(Day 10--12)}, \\
130, & 288 < t \leq 384 \; \text{(Day 13--16)}, \\
400, & t > 384 \; \text{(Day 17 onward)}.
\end{cases}
\]

\noindent The seasonality variation in auction dynamics~\cite{he2019identification},  \cite{ karlsson2014adaptive} is simulated using a seasonal function
\[
h(t) := \sin\left(\frac{2\pi t}{T_{\text{day}}} - 1.6\right) + 0.32 \sin\left(\frac{4\pi t}{T_{\text{day}}} - 1.5\right),
\]
where \(T_{\text{day}} = 720\) is the number of steps for one day. We use this $h(t)$ in the unshaded bid computation as in \cite{karlsson2024optimization}.

We define the parameter space of shading variable \(\bm{\theta}=(\theta_1, \theta_2)\in\mathbb{R}^{2}\) as \( [0, 1] \times [0, 3] \), discretized into a uniform \(10 \times 10\) grid, resulting in \(N = 100\) samples $\{\bm{\theta}_j\}_{j\in [N]}\equiv\{(\theta_{1j},\theta_{2j})\}_{j=\in [N]}$ (we use $j$ for sample index). The initial distribution \( \bm{\mu}^0 \in \Delta^{N-1} \) is chosen to be uniform, i.e., $\mu^0 = \frac{1}{N}\sum_{j=1}^{N}\delta_{\bm{\theta}_{j}}$, where $\delta_{\bm{\theta}_{j}}$ is Dirac delta at the $j$th sample $\bm{\theta}_j\in\mathbb{R}^2$, $j\in[N]$.

Win probability $\hat{f}_k$ is computed using~\eqref{eq:winprob} with fixed $\bm{a} = (a_1, a_2, a_3, a_4, a_5)$ representing device type, ad slot position, publisher domain, content category, and user location. For simplicity, we fix only five attributes, but this can be generalized for more as needed. In the simulation, each component of $\bm{a}$ is sampled independently from a uniform distribution on $[0,1]$. The parameters in \eqref{eq:winprob} are estimated via exponentially weighted exponentially weighted logistic regression with \(\ell_2\) regularization, solved by Adam optimizer with learning rate \(10^{-3}\).


We computed the entropy-regularized Wasserstein-proximal updates as per Theorem~\ref{Thm:WassProxOfLinear} with step size \( h = 1 \) and regularization parameter \( \varepsilon = 0.8 \). 


Fig.~\ref{fig:finalBid} shows the evolution of bid prices over the campaign horizon, where the red curve captures adaptations to changing market conditions, seasonality, and budget constraints. Bids rise with higher budgets or win probabilities, and fall under tighter limits or lower demand. 

Fig.~\ref{fig:MuKDistributions} depicts the evolution of the parameter distribution \(\bm{\mu}^k\) over time, computed using our Algorithm \ref{algoirithm1}. The $\bm{\mu}^{k}$ is seen to move from the initial uniform to being concentrated around high-value, high-success regions.

Fig.~\ref{fig:CumDailySpend} compares the desired and actual daily spend governed by \eqref{actualSpend}, showing that the learned policy tracks the non-stationary budget target while following the seasonality and budget constraint trends despite stochastic feedback. Around abrupt budget changes, the policy may briefly fluctuate, but it self-corrects and realigns with the desired trajectory by the following day. Fig.~\ref{fig:CumulativeSpend} shows that cumulative spend (dashed red) closely follows the target (solid black).

\begin{figure}[t]
    \centering
 \includegraphics[width=\linewidth]{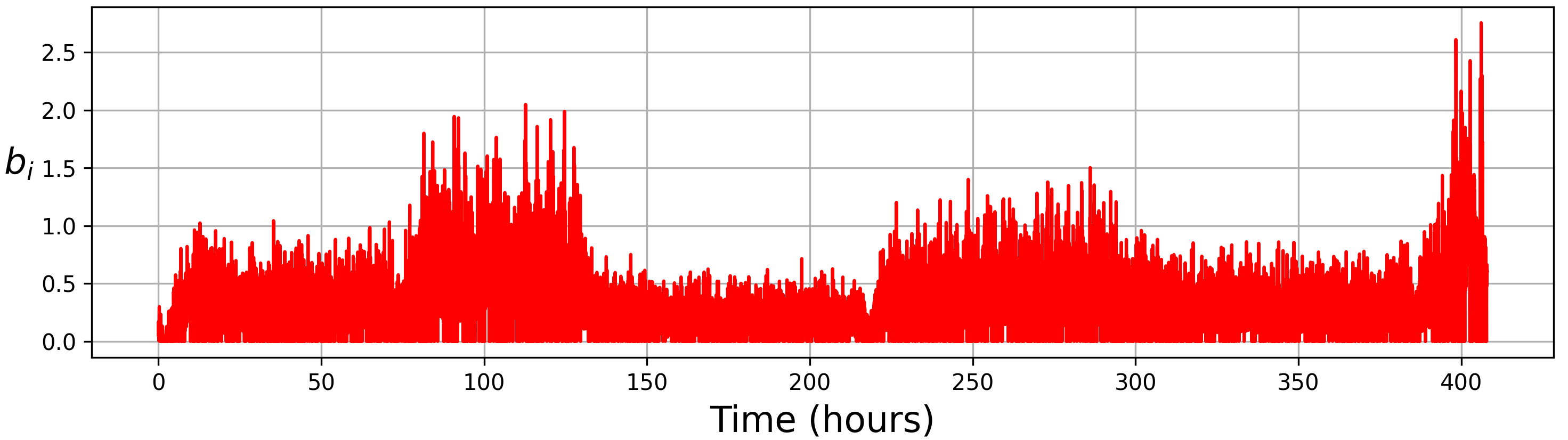}
    \caption{Final submitted bid $b_i$ over campaign time.}
    \vspace*{-0.1in}
    \label{fig:finalBid}
\end{figure}

\begin{figure}[ht]
    \centering
    \includegraphics[width=\linewidth]{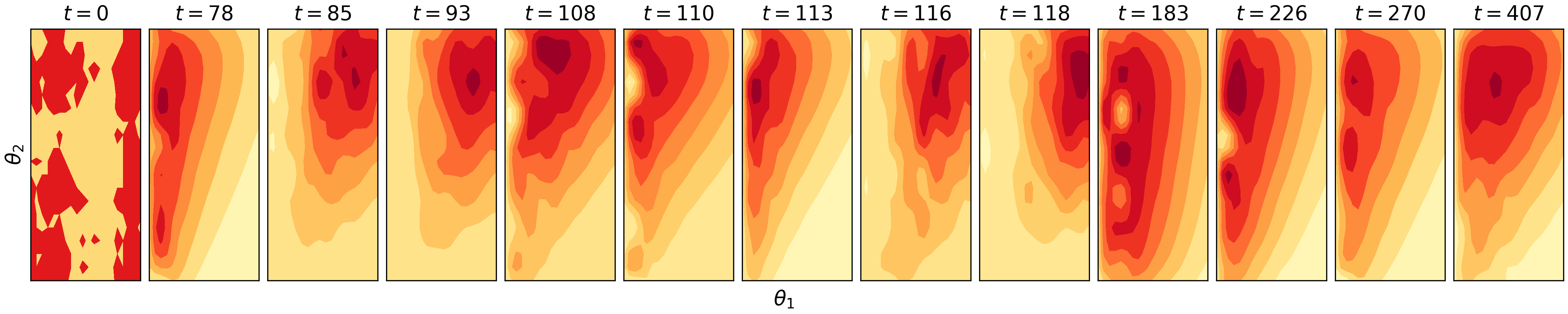}
    \caption{Evolution of the  distribution \(\bm{\mu}^k\) over the bid shading parameter $(\theta_1,\theta_{2})$ space $[0, 1] \times [0, 3]$.}
    \label{fig:MuKDistributions}
\end{figure}

\begin{figure}[ht]
    \centering
    \begin{subfigure}[t]{0.48\linewidth}
        \centering
        \includegraphics[width=\linewidth]{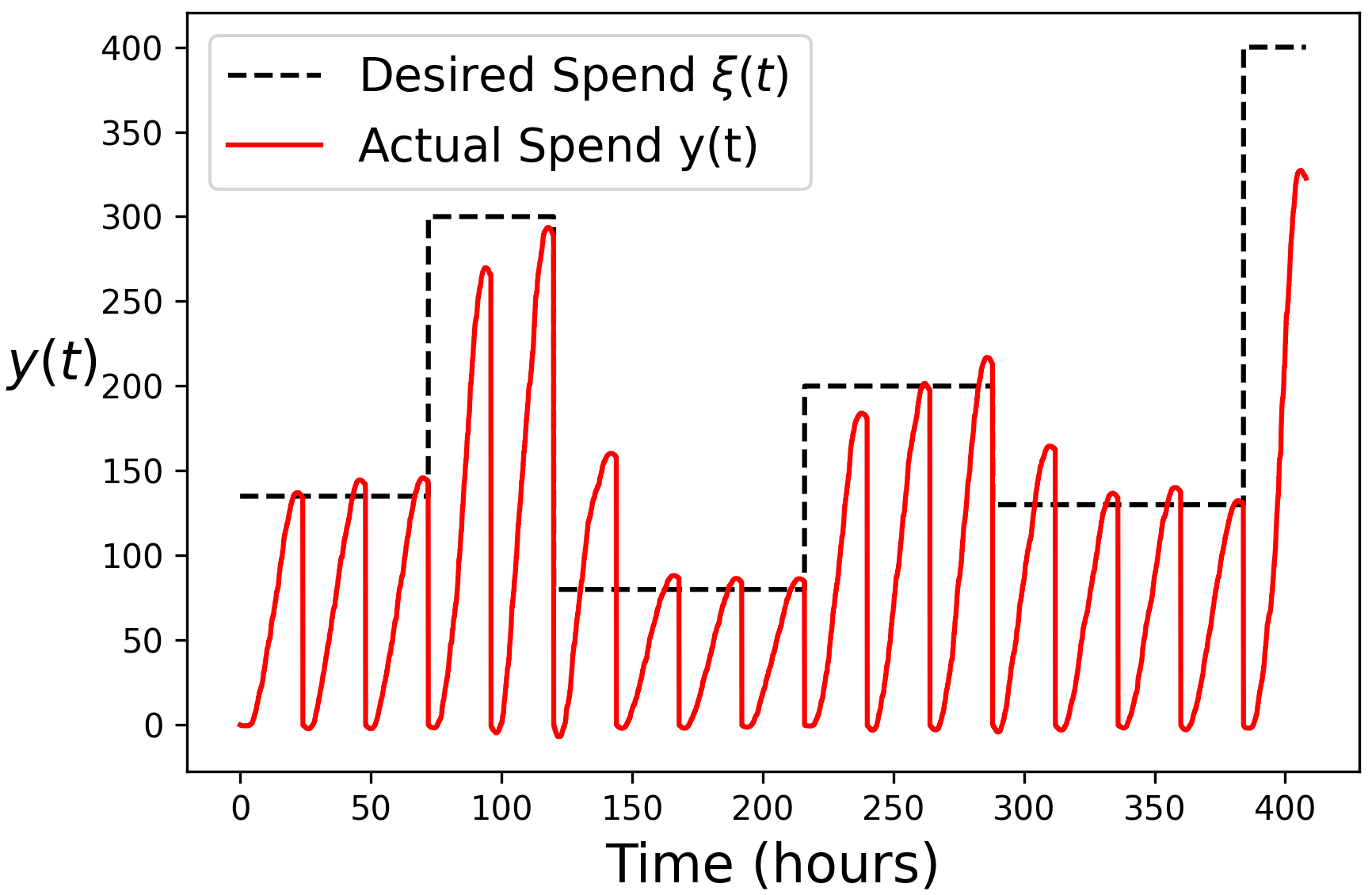}
        \caption{Desired \(\xi(t)\) vs. actual spend $y(t)$ over time $t$.}
        \label{fig:CumDailySpend}
    \end{subfigure}
    \hfill
    \begin{subfigure}[t]{0.48\linewidth}
        \centering
        \includegraphics[width=\linewidth]{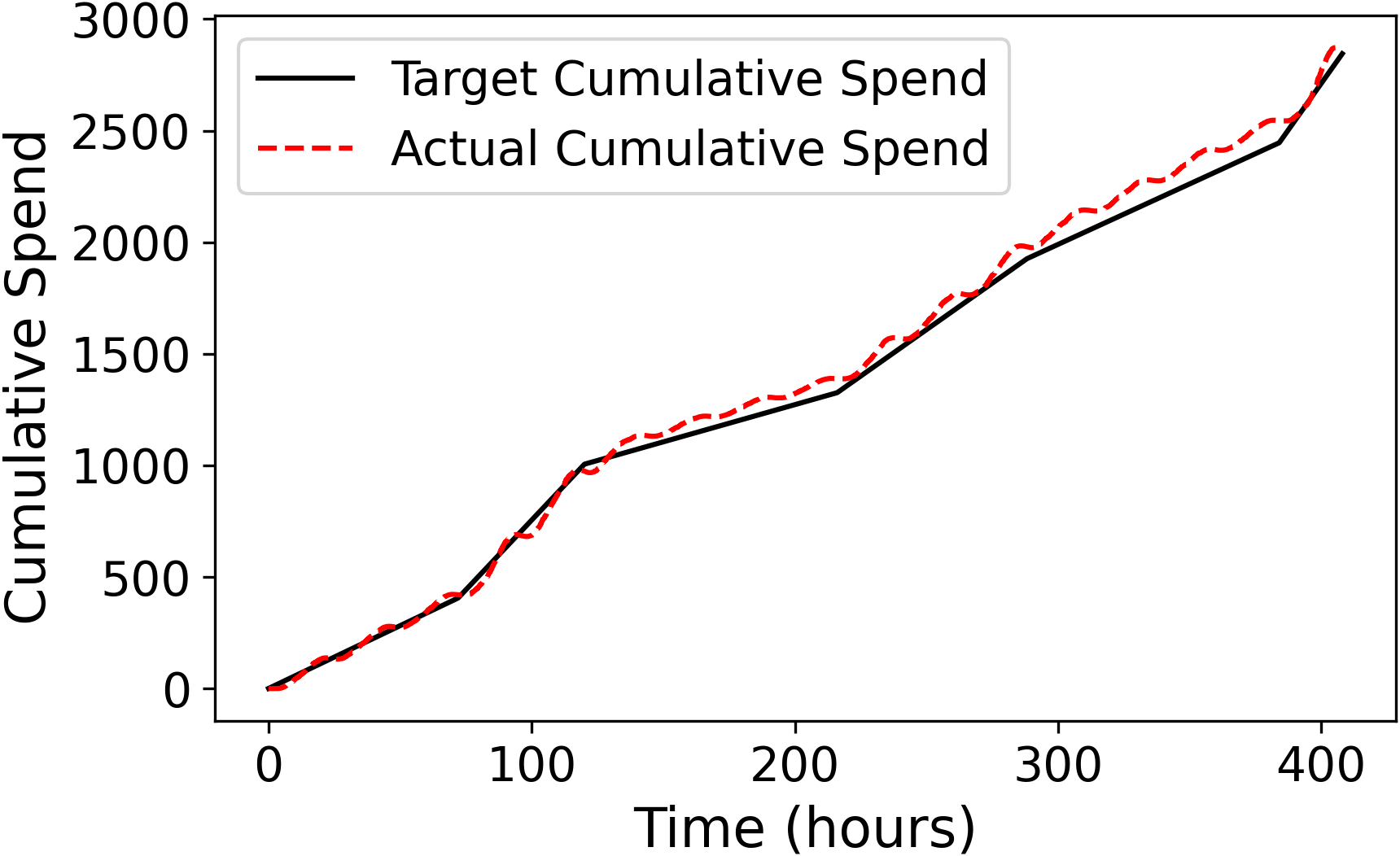}
        \caption{Cumulative target vs. actual spend over time.}
        \label{fig:CumulativeSpend}
    \end{subfigure}
    \caption{Comparison of daily and cumulative spending.}
    \label{fig:SpendComparison}
\end{figure}




\section{Conclusion}\label{sec:conclusions}
We propose a new approach to bid shading in first-price auctions by formulating the problem as a convex optimization over the space of joint probability distribution of shading parameters. The proposed method leverages a closed form entropy-regularized Wasserstein-proximal update that adapts the parameter distribution based on observed auction outcomes. An illustrative numerical experiment demonstrates the proposed solution.


\bibliographystyle{IEEEtran}
\bibliography{references.bib}

\end{document}